\tikzset{auto, >=stealth}
\tikzset{every edge/.append style={shorten >=1pt}}
\tikzstyle{decision} = [diamond, draw, fill=blue!20, text width=4.5em, text badly centered, node distance=3cm, inner sep=0pt]
\tikzstyle{block} = [rectangle, draw, fill=blue!20, text width=6em, text centered, rounded corners, minimum height=4em]
\tikzstyle{greenblock} = [rectangle, draw, fill=green!20, text width=6em, text centered, rounded corners, minimum height=4em]
\tikzstyle{redblock} = [rectangle, draw, fill=red!20, text width=6em, text centered, rounded corners, minimum height=4em]
\tikzstyle{varblock} = [rectangle, draw, fill=yellow!20, text width=6em, text centered, rounded corners, minimum height=4em]
\tikzstyle{bigblock} = [rectangle, draw, fill=gray!20, text width=6.6em, text centered, rounded corners, minimum height=13.2em]
\tikzstyle{line} = [draw, -latex']
\tikzstyle{cloud} = [draw, ellipse,fill=red!20, node distance=3cm, minimum height=2em]
\renewcommand{\the@inst}{\alph{@inst}}
\newcommand{\RNN}{R}
\newcommand{\transrnn}{f}
\newcommand{\accrnn}{g}
\newcommand{\initrnn}{h_0}
\newcommand{\Prob}{P}
\newcommand{\tprob}{p}
\newcommand{\ntprob}{p}
\newcommand{\width}{\epsilon}
\newcommand{\confidence}{\gamma}
\newcommand{\letterprobp}[1]{p_{#1}}
\newcommand{\colorcounter}[1]{\textcolor{red}{#1}}
\newcommand{\colorsat}[1]{\textcolor{teal}{#1}}
\newcommand{\ext}[1]{\smash{\hat{#1}}}
\newcommand{\sdim}{k}
\newcommand{\R}{\mathbb{R}}
\newcommand{\N}{\mathbb{N}}
\newcommand{\A}{A}
\newcommand{\Hyp}{\mathcal{H}}
\newcommand{\PropAut}{\A}
\newcommand{\Lstar}{\textup{L}^\ast}
\newcommand{\emptyword}{\lambda}
\renewcommand{\epsilon}{\varepsilon}
\newcommand{\temporalnet}{G}
\newcommand{\vertices}{V}
\newcommand{\edges}{E}
\DeclarePairedDelimiter{\floor}{\lfloor}{\rfloor}
\newcommand\subsetsim{\mathrel{%
  \ooalign{\raise0.2ex\hbox{$\subset$}\cr\hidewidth\raise-0.8ex\hbox{\scalebox{0.9}{$\sim$}}\hidewidth\cr}}}
\newcommand{\randA}{\A_{\mathsf{rand}}}
\newcommand{\specA}{\A}
\title{Property-Directed Verification of Recurrent Neural Networks\thanks{The first three authors contributed equally, the remaining authors are ordered alphabetically}}
\author{
	Igor Khmelnitsky\inst{a,b} \and
	Daniel Neider\inst{c} \and
	Rajarshi Roy\inst{c} \and
	Benoît Barbot\inst{d} \and
	Benedikt Bollig\inst{a} \and
	Alain Finkel\inst{a,g} \and\\
	Serge Haddad\inst{a,b} \and
	Martin Leucker \inst{e} \and
	Lina Ye\inst{a,b,f}
}
\authorrunning{I. Khmelnitsky et al.}
\institute{
	LSV, CNRS, ENS Paris-Saclay, Universit{\'e} Paris-Saclay, France \and
	Inria, France \and
	Max Planck Institute for Software Systems, Kaiserslautern, Germany \and
	Universit{\'e} Paris-Est Cr{\'e}teil, France \and
	Institute for Software Engineering and Programming Languages, Universit{\"a}t zu L{\"u}beck, Germany \and
	CentraleSup{\'e}lec, Universit{\'e} Paris-Saclay, France \and
	Institut Universitaire de France, France
}
\begin{document}

\maketitle
\setcounter{footnote}{0} 

\begin{abstract}
This paper presents a property-directed approach to verifying recurrent neural networks (RNNs). 
To this end, we learn a deterministic finite automaton as a \emph{surrogate model} from a given RNN using active automata learning.
This model may then be analyzed using \emph{model checking} as verification technique. 
The term \emph{property-directed} reflects the idea that our procedure is guided and controlled by the given property rather than performing the two steps separately. 
We show that this not only allows us to discover \emph{small} counterexamples fast, but also to generalize them by pumping towards faulty flows hinting at the underlying error in the RNN.
\end{abstract}


\section{Introduction}
\label{sec:intro}

Recurrent neural networks (RNNs) are a state-of-the-art tool to
represent and learn sequence-based models.
They have applications in time-series prediction, sentiment analysis,
and many more. In particular,
they are increasingly used in safety-critical applications
and act, for example, as controllers in cyber-physical systems \cite{AkintundeKLP19}.
Thus, there is a growing need for formal verification.
However, research in this domain is only at the beginning.
While formal-methods based techniques such as \emph{model checking} \cite{BK2008}
have been successfully used in practice and reached a certain
level of industrial acceptance, a transfer to machine-learning
algorithms has yet to take place.

A recent research stream aims at extracting, from RNNs,
state-based formalisms such as finite automata.
Finite automata turned out to be useful for understanding
and analyzing all kind of systems using testing or model checking. 
In the field of formal verification, it has proven to be beneficial to run
the extraction and verification process simultaneously.
Moreover, the state space of RNNs tends to be prohibitively large, or even infinite,
and so do incremental abstractions thereof.
Motivated by these facts, we propose an intertwined approach to verifying RNNs,
where, in an incremental
fashion, grammatical inference and model checking go hand-in-hand.
Our approach is inspired by black-box checking \cite{PeledVY02} where one \emph{exploits}
the property to be verified \emph{during} the verification
process. Our procedure can be used to find misclassified
(positive and negative) examples or to
verify a system that the given RNN controls.

\paragraph{Property-directed verification.}
Let us give a glimpse of our method.
We consider an RNN $\RNN$ as a binary classifier of finite sequences over
a finite alphabet $\Sigma$. In other words, $R$ represents the set of strings
that are classified as positive. We denote this set by $L(\RNN)$
and call it the \emph{language} of $\RNN$.
Note that $L(\RNN) \subseteq \Sigma^\ast$.
We would like to know whether $R$ is compatible with a given
specification $\A$, written $\RNN \models \A$.
Here, we assume that $\A$ is given as a (deterministic) finite automaton.
Finite automata are algorithmically feasible, albeit
having a reasonable expressive power: many abstract specification languages
such as temporal logics or regular expressions can be compiled into finite automata \cite{GiacomoV15}.

But what does $\RNN \models \A$ actually mean?
In fact, there are various options. If $\A$ provides a complete characterization
of the sequences that are to be classified as positive, then $\models$
refers to language equivalence, i.e., $L(\RNN) = L(A)$.
Note that this would imply that $L(\RNN)$ is supposed to be a regular language, which may
rarely be the case in practice.
Therefore, we will focus on the more versatile inclusion $L(\RNN) \subseteq L(\A)$.
This amounts to checking that
$\RNN$ does not produce false positives (wrt.\ the property $\A$).
That is, all strings classified as positive by $\RNN$ must be included in the
specification. In general, $\A$ is a regular abstraction of the concept that
$\RNN$ is supposed to represent. It contains words where a
positive classification is still considered acceptable.
In turn, the complement $\Sigma^\ast \setminus L(\A)$
contains the words that $\RNN$ \emph{must classify as negative} at any cost.
As finite automata can be complemented, we can deal with any
of these interpretations.

For instance, assume that $\RNN$ is supposed to recognize valid
XML documents, which, seen as a set of strings, is not a regular language.
We may want to make sure that every opening tag is eventually
followed by a corresponding closing tag\footnote{over a finite predefined
set of tags, which are contained in $\Sigma$}, though the number
of opening and the number of closing tags may differ. As specification, we can then
take an automaton $\A$ accepting the corresponding \emph{regular} set
of strings. For example, $L(\A)$ contains
\begin{center}
\texttt{\small<book><author><author></author></book>}.
\end{center}
However, it does not contain
\begin{center}
\texttt{\small<book><author></author><author></book>}
\end{center}
since the second occurrence of \texttt{\small<author>} is not followed
by some \texttt{\small</author>} anymore.

Symmetrically, our procedure can be used to find
false negative classifications:
If $\A$ represents the words that $\RNN$ \emph{must classify as
positive}, we can run our procedure using the complement of $\A$ as specification
and inverting the outputs of $\RNN$. For example, we may
then choose $\A$ such that $L(\A)$ is the set of XML documents
that $\RNN$ was trained on.

Interpreting $\models$ as inclusion also corresponds to the model-checking problem in formal verification.
In particular, if $\RNN$ acts as a controller of a system and therefore represents
a set of system runs, this would allow us to deduce that the entire system meets the specification.

Now let us briefly explain how we check $\RNN \models \A$, i.e., the inclusion
$L(\RNN) \subseteq L(A)$. A detailed exposition can be found in
Section~\ref{sec:prop-directed}.
Our approach is schematically depicted in Figure~\ref{fig:work-flow}.
At its heart is Angluin's L* learning algorithm \cite{Angluin87},
which tries to find a finite automaton approximating a given
black-box system (in our case, the RNN $\RNN$).
To do so, it produces a sequel of \emph{hypothesis} automata
based on queries to the RNN.
The crux is that every such hypothesis $\Hyp$ can be compared against
the specification $\A$ using classical model-checking algorithms.
Suppose $L(\Hyp) \subseteq L(\A)$ does not hold (right branch in the figure).
If a corresponding counterexample, i.e., a word that is contained in
$L(\Hyp)$ but not in $L(\A)$, is confirmed by the RNN,
then we found a string that was mistakenly classified as positive by the RNN.
Otherwise, the counterexample can be fed back to the algorithm
to refine the hypothesis.
If, on the other hand, we have $L(\Hyp) \subseteq L(\A)$
(left branch in the figure),
then a comparison of $\RNN$ with the hypothesis is necessary.
Again, a corresponding counterexample serves for refinement
of the hypothesis, whereas inclusion $L(\RNN) \subseteq L(\Hyp)$
allows us to deduce the correctness of $\RNN$.
It remains of course to clarify how we test this inclusion,
because $L(\RNN)$ is still an unknown, possibly not regular
language. We rely here on statistical model checking.
Just like in black-box checking, our experimental results suggest
that the process of interweaving automata learning and model checking
is beneficial in the verification of RNNs and offers advantages
over more obvious approaches such as pure statistical model checking.

Note that, though we only cover the case of binary classifiers,
our framework is in principle applicable to multiple labels
using one-vs-all classification.

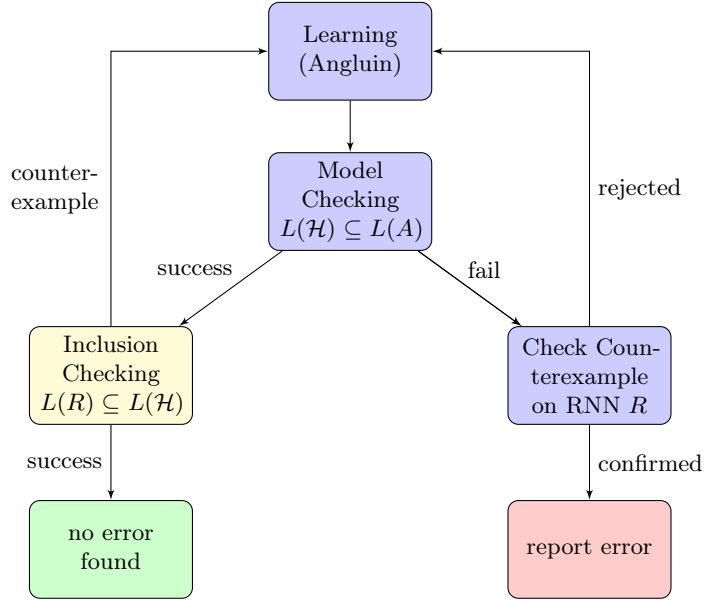
\begin{figure}[t]
	\centering
	\begin{tikzpicture}[node distance = 2cm, auto]
	
	    \node [block] (learning) {Learning (Angluin)};
	    \node [block, below of=learning] (mc) {Model Checking $L(\Hyp) \subseteq L(\A)$};
	    \node [varblock, below left=1cm and 1cm of mc] (equivalence) {Inclusion Checking $L(\RNN) \subseteq L(\Hyp)$};
	    \node [block, below right=1cm and 1cm of mc] (counter) {Check Counterexample on RNN $\RNN$};
	    \node [redblock, below=1cm of counter] (false) {report error};
	    \node [greenblock, below=1cm of equivalence] (true) {no error found};
	
	    \path [line] (learning) -- (mc);
	    \path [line] (mc) -- (counter);
	    \path [line] (mc) -- node [near start, left, xshift=-2mm] {success} (equivalence);
	    \path [line] (mc) -- node [near start, right, xshift=2mm] {fail} (counter);
	
	    \path [line] (counter) -- node {confirmed}(false);
	    \path [line] (equivalence) -- node[left] {success}(true);
	    \path [line] (counter)|- node [near start, right] {rejected} (learning);
	    \path [line] (equivalence)|- node [near start, left] {\begin{tabular}{c}counter-\\example\end{tabular}} (learning);
	    
	\end{tikzpicture}
	
	\caption{Property-directed verification of RNNs} \label{fig:work-flow}
\end{figure}

\paragraph{Experimental Evaluation.}
In fact, we compare our proposal with two natural alternatives.
The first method is (pure) statistical model checking,
which evaluates the network on the basis of random samples. 
The second method is to first extract from the given RNN
a finite-state model that is then verified using a standard
model-checking procedure. 
Note that our method can be seen as a combination of both where,
in addition, extraction and verification are intertwined.

Our experimental results show that misclassified inputs are found
faster than in the cases of pure statistical sampling and
model checking on an extracted finite automaton.
A further key advantage of our approach is that,
unlike in statistical model checking,
we often find a \emph{family} of counterexamples,
in terms of loops in the hypothesis automaton, which
testify conceptual problems of the given RNN.

\paragraph{Outline.}
The next subsection describes further related work.
In Section~\ref{sec:preliminaries}, we recall basic notions such as RNNs
and finite automata. Section~\ref{sec:verification} describes two basic algorithms for the verification of RNNs,
before we present property-directed verification in Section~\ref{sec:prop-directed}.
The experimental evaluation and a thorough discussion can be found in Section~\ref{sec:evaluation}.

\subsection*{Related Work}

Our approach to the verification of RNNs
relies on black-box checking, which has been designed
as a combination of model checking and testing
in order to verify finite-state systems \cite{PeledVY02}.

Mayr and Yovine describe an adaptation of the PAC variant of Angluin's L* algorithm
that can be applied to neural networks \cite{MayrY18}.
As L* is not guaranteed to terminate when facing non-regular
languages, the authors impose a bound on the number of states of the
hypotheses and on the length of the words for membership queries. 

Weiss et al.\ introduce a different technique to extract
finite automata from RNNs \cite{WeissGY18a}. It also
relies on Angluin's L*
but, moreover, uses an orthogonal abstraction of the given
RNN to perform equivalence checks between them. 
Since both are hypotheses for the RNN ground truth, consequently, as acknowledged by the authors, 
the convergence of the two hypotheses cannot  guarantee their equivalence to the underlying network.  
Like in \cite{MayrY18},
this approach leads to finite-state models
that can be used to analyze the underlying network.
Though formal verification was not the principal goal,
this approach turns out to be useful for finding
misclassified examples.

Ayache et al.\ study how to extract weighted finite automata from RNNs without 
applying equivalence queries \cite{AyacheEG18}. To solve the same problem, the work \cite{OkudonoWSH20} 
employs an L*-like method that resorts to regression 
techniques to answer equivalence queries, where 
counterexample candidates are prioritized by exploiting the internal state space of the given RNN.

Other extraction methods go back to the 90s and combine
partitioning the state space with transition
sampling \cite{OmlinG96}.

Elboher et al.\ present a counter-example guided
verification framework whose workflow shares similarities with
our property-guided verification \cite{ElboherGK20}. However, their approach
addresses feed-forward neural networks (FFNNs) rather than RNNs
and, therefore, involves neither automata-learning algorithms
nor classical model checking. For recent progress in the area
of safety and robustness verification of deep neural networks,
see~\cite{Kwiatkowska:2019}.

The paper \cite{AkintundeKLP19} studies formal verification of
systems where an RNN-based agent interacts with a
linearly definable environment.
The verification procedure proceeds by a reduction to
FFNNs. It is complete and fully automatic. This is at the expense of
the expressive power of the specification language, which is restricted to properties that only depend on
bounded prefixes of the system's executions. In our approach, we do not restrict the
kind of regular property to verify.

The work \cite{JBK-arxiv2020} also reduces the verification of RNNs to FFNN verification. To do so, the authors calculate inductive invariants, thereby avoiding a blowup in the network size. The effectiveness of their approach is demonstrated on audio signal systems. 
Like in \cite{AkintundeKLP19}, a time interval is imposed in which a given property is verified.


\section{Preliminaries}\label{sec:preliminaries}

In this section, we provide definitions of basic concepts
such as languages, recurrent neural networks,
finite automata, and Angluin's L* algorithm.
Hereby, we assume familiarity with
basic notions from formal languages and automata theory.

\paragraph{Words and Languages.}
Let $\Sigma$ be an alphabet, i.e., a nonempty finite set,
whose elements are called \emph{letters}. A (finite) word $w$ over $\Sigma$
is a sequence $a_1 \ldots a_n$ of letters $a_i \in \Sigma$.
The length of $w$ is defined as $|w| = n$. The unique word
of length $0$ is called the \emph{empty word} and denoted by
$\emptyword$. We let $\Sigma^\ast$ refer to the set of all words over $\Sigma$.
Any set $L \subseteq \Sigma^\ast$ is called
a \emph{language} (over $\Sigma$).
For two languages $L_1,L_2 \subseteq \Sigma^\ast$,
we let $L_1 \setminus L_2 = \{w \in \Sigma^\ast \mid w \in L_1$ and $w \not\in L_2\}$.
The symmetric difference of $L_1$ and $L_2$
is defined as $L_1 \oplus L_2 = (L_1 \setminus L_2) \cup (L_2 \setminus L_1)$.

Let $X$ be any set and $f: X \times \Sigma \to X$.
We can extend $f$ to a mapping $\ext{f}: X \times \Sigma^\ast \to X$
by $\ext{f}(x,\emptyword) = x$ and
$\ext{f}(x,aw) = \ext{f}(f(x,a),w)$. In recurrent neural networks and
finite automata, which are defined below,
$X$ will be instantiated by a set of states, and
$f$ will play the role of a transition function.

In order to sample words over $\Sigma$, we assume
a probability distribution $(\letterprobp{a})_{a \in \Sigma}$ on $\Sigma$
(by default, we pick the uniform distribution) and a
``termination'' probability $\tprob \in [0,1]$. Together, they determine a
natural probability distribution on $\Sigma^\ast$ given, for $w = a_1 \ldots a_n \in \Sigma^\ast$, by
$P(w) = \letterprobp{a_1} \cdot \ldots \cdot \letterprobp{a_n} \cdot (1-\tprob)^n \cdot \ntprob$.
According to the geometric distribution, the expected length of a word
is $(1/\tprob) -1$, with a variance of $(1-\tprob)/\tprob^2$.

Let $\epsilon > 0$ be an error parameter and $L_1,L_2 \subseteq \Sigma^\ast$
be languages.
We call $L_1$ $\epsilon$-\emph{approximately correct} wrt. $L_2$
if $\Prob(L_1 \setminus L_2) = \sum_{w \in L_1 \setminus L_2} \Prob(w) < \epsilon$.
Note that this is not a symmetric relation, as our notion of correctness
relies on inclusion rather than symmetric difference.

\paragraph{Recurrent Neural Networks.}
Recurrent neural networks (RNNs)
are a generic term for artificial neural networks that process sequential data.
They are particularly suitable for classifying sequences of varying length,
which is essential in domains such as natural language processing (NLP) or
time-series prediction.

Formally, a \emph{recurrent neural network (RNN)} over
$\Sigma$ is given by a tuple
$\RNN = (\sdim,\transrnn,\initrnn,\accrnn)$. Here,
$\sdim \in \N$ is the dimension of the \emph{state space}
$\R^\sdim$, which contains the \emph{initial state}
$\initrnn \in \R^\sdim$. Moreover,
$\transrnn: \R^\sdim \times
\Sigma \to \R^\sdim$ is the transition function
describing the effect of applying an input letter in a given source
state. The mapping
$\accrnn: \R^\sdim \to \{0,1\}$ identifies rejecting (0) and accepting (1) states.
The language of $\RNN$ is then defined as
$L(\RNN) = \{w \in \Sigma^\ast \mid \accrnn(\ext{\transrnn}(\initrnn,w)) = 1\}$.
In other words, a word $w$ is contained in $L(\RNN)$ if,
starting from $\initrnn$, the state
that we obtain by applying
$\transrnn$ successively to the letters of $w$
is accepting.

There are several ways to effectively represent
the functions $\transrnn$ and $\accrnn$. Among the most popular architectures
are (simple) Elman RNNs, long short-term memory (LSTM) \cite{DBLP:journals/neco/HochreiterS97},
and GRUs \cite{ChoMGBBSB14}.
Their expressive power depends on the exact architecture, but
generally goes beyond the power of finite automata, i.e., the class
of regular languages.

\paragraph{Finite Automata.}
RNNs provide a very powerful learning framework.
However, once trained, their internal structure is difficult to understand
and to verify.
In contrast, while generally being less expressive, the simplicity
of finite automata provides a simple framework for modeling, specifying, and
verifying computer systems.
The main difference to RNNs is that their state space is doomed to be finite.

Formally, a \emph{deterministic finite automaton (DFA)} over $\Sigma$ is a tuple
$\A = (Q,\delta,q_0,F)$ where $Q$ is a finite set
of states, $q_0 \in Q$ is the initial state,
$F \subseteq Q$ is the set of final states, and
$\delta \colon Q \times \Sigma \to Q$ is the transition function.
The language of $\A$ is defined as
$L(\A) = \{w \in \Sigma^\ast \mid \ext{\delta}(q_0,w) \in F\}$.
A language $L \subseteq \Sigma^\ast$ is called \emph{regular} if
$L = L(\A)$ for some DFA $\A$.

We sometimes use RNNs and DFAs synonymous for their respective
languages. For example, we say that $\RNN$ is $\width$-approximately
correct wrt. $\A$ if $L(\RNN)$ is $\width$-approximately
correct wrt. $L(\A)$.

\paragraph{Temporal Logics and Regular Expressions.}

Linear-time temporal logic (LTL)
has recently attracted renowned interest in the AI community \cite{GiacomoV15}.
As opposed to finite automata, LTL has
the advantage of being modular and easy to use, because many specifications
involving natural-language constructs such as ``at some point in the future''
or ``always in the future'' have precise formal counterparts in LTL.
Similarly, regular expressions offer an intuitive way of specifying
regular languages.
In the following, we assume that a property given in terms
of an LTL formula or a regular expression has already been
compiled into a corresponding DFA $\A$.

\paragraph{Angluin's Algorithm.}
Angluin introduced $\Lstar$, a classical instance of a learning algorithm
in the presence of a minimally adequate teacher (MAT) \cite{Angluin87}.
Its goal is to come up with a DFA that recognizes a given regular language
$L \subseteq \Sigma^\ast$.
The crux is that, while $\Sigma$ is given, $L$ is a priori unknown and can
only be accessed through
\emph{membership queries (MQ)} and \emph{equivalence queries (EQ)}.

Given any regular language $L \subseteq \Sigma^\ast$,
Angluin's algorithm $\Lstar$
eventually outputs the unique minimal DFA $\Hyp$ such that $L(\Hyp) = L$.
We do not detail the algorithm here but only define the
interfaces that we need to embed $\Lstar$ into our framework.
$\Lstar$ may ask whether
\begin{description}\itemsep=0.5ex
	\item[(MQ)] $w \mathrel{\smash{\stackrel{?}{\in}}} L$ for an arbitrary word $w \in \Sigma^\ast$. Thus, the answer is either yes or no.
	\item[(EQ)] $L(\Hyp) \stackrel{\smash{?}}{=} L$ for an arbitrary DFA $\Hyp$. Again, the answer is  either yes or no. If the answer is no, one also gets a counterexample word from the symmetric difference $L(\Hyp) \oplus L$.
\end{description}
Essentially, $\Lstar$ asks MQs until it considers that it has a consistent data set to come up with a hypothesis DFA $\Hyp$, which then undergoes an EQ. If the latter succeeds, then the algorithm stops. Otherwise, the counterexample and possibly more membership queries are used to refine the hypothesis. The algorithm provides the following guarantee:
If MQs and EQs are answered according to a given regular language $L \subseteq \Sigma^\ast$,
then the algorithm eventually outputs,
after polynomially\footnote{in the index
of the right congruence associated with $L$ and in the size of the
longest counterexample provided by the oracle}
many steps, the (unique) minimal DFA $\Hyp$ such that $L(\Hyp) = L$.

Angluin also proposed a \emph{probably approximately correct (PAC)} variant of her algorithm,
where EQs are implemented in terms of random MQs, namely as statistical tests~\cite{Angluin87}. An EQ
is considered successful if a (large enough) number of words sampled according to
a given probability distribution on words agree on both $L(\Hyp)$ and $L$.
We get back to this kind of sampling further below.


\section{Verification Approaches}\label{sec:verification}

Before we present (in Section~\ref{sec:prop-directed}) our method
of verifying RNNs, we here describe two simple approaches.
The experiments will later compare all three algorithms
wrt.\ their performance.

\paragraph{Statistical Model Checking (SMC).}
The obvious approach for checking whether the RNN under test $\RNN$
satisfies a given specification $\PropAut$, i.e., to check whether
$L(\RNN) \subseteq L(\PropAut)$, is by a form of random
testing. The idea is to generate a finite test suite
$T \subset \Sigma^\ast$ and to check, for each $w \in T$, whether for
$w \in L(\RNN)$ also $w \in L(\PropAut)$ holds. If not, each such $w$
is a \emph{counterexample}. On the other hand, if none of the words turns out to be a counterexample,
the property holds on $\RNN$ with a certain error probability.
The algorithm is sketched as
Algorithm~\ref{algo:statistical_model_checking}.

Note that the test suite is sampled according to a probability distribution 
on $\Sigma^\ast$. Recall that our choice
depends on two parameters:
a probability distribution $(\letterprobp{a})_{a \in \Sigma}$ on $\Sigma$
(by default the uniform distribution) and a
``termination'' probability $\tprob \in [0,1]$.
Moreover, for $w = a_1 \ldots a_n \in \Sigma^\ast$, we set
$P(w) = \letterprobp{a_1} \cdot \ldots \cdot \letterprobp{a_n} \cdot (1-\tprob)^n \cdot \ntprob$.

\begin{algorithm}[t]
	\DontPrintSemicolon
    
	\KwIn{An RNN $\RNN$, a DFA $\PropAut$, and $\width,\confidence \in (0, 1)$}
	\BlankLine

	\For{$i = 1, \ldots, \log(2 / \width) / (2 \confidence^2)$}
	{

		$w \gets \text{sampleWord()}$\;

		\uIf{$w \in L(\RNN) \text{ and } w \not\in L(\PropAut)$}
		{
			\Return \colorcounter{``Counterexample $w$''}\;
		}
	}
	
	\Return \colorsat{``Property satisfied''}\;
    
	\caption{Statistical Model Checking of RNNs}
	\label{algo:statistical_model_checking}    
\end{algorithm}

\begin{theorem}[Correctness of Statistical Model Checking]
If Algorithm~\ref{algo:statistical_model_checking}, with $\width,\confidence \in (0,1)$, terminates with
``Counterexample $w$'', then $w$ is mistakenly classified by $\RNN$ as positive.
If it terminates with ``Property satisfied'',
then $\RNN$ is $\epsilon$-approximately correct wrt.\ $\A$ with probability at least $1-\confidence$.
\end{theorem}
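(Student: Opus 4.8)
The plan is to split the statement into its two independent assertions, the first of which is purely logical and the second genuinely probabilistic.

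For the ``Counterexample'' case I would argue directly from the control flow. The algorithm emits ``Counterexample $w$'' only from inside the conditional, which is entered exactly when $w \in L(\RNN)$ and $w \notin L(\A)$, i.e.\ when $w \in L(\RNN) \setminus L(\A)$. Under the paper's reading of $\models$ as the inclusion $L(\RNN) \subseteq L(\A)$, such a $w$ is by definition classified as positive by $\RNN$ while lying outside the specification, hence ``mistakenly classified as positive''. No randomness enters, so this half needs nothing beyond the guard of the \textbf{if}.

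For the ``Property satisfied'' case I would set up the obvious Bernoulli model. Fix $\RNN$ and $\A$, let $B = L(\RNN) \setminus L(\A)$ be the set of bad words and $p = \Prob(B)$ its mass under the sampling distribution $P$; by definition $\RNN$ is $\epsilon$-approximately correct wrt.\ $\A$ iff $p < \epsilon$. Each iteration draws $w$ i.i.d.\ from $P$, and the test $w \in L(\RNN)$ together with $w \notin L(\A)$ is decidable by running the network and the DFA, so the indicators $X_i = \mathbf{1}[w_i \in B]$ are i.i.d.\ Bernoulli$(p)$; the algorithm returns ``Property satisfied'' exactly when the empirical frequency $\bar X$ over the $N = \log(2/\epsilon)/(2\gamma^2)$ samples is zero.

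The heart of the argument is to bound the error event, namely outputting ``satisfied'' while $p \ge \epsilon$. I read this as a frequentist PAC guarantee, so I would fix the worst case $p \ge \epsilon$ and bound the sampling probability that all $N$ draws avoid $B$. The direct estimate gives $(1-p)^N \le (1-\epsilon)^N \le e^{-\epsilon N}$, but the shape of the loop count (the $\gamma^2$ in the denominator and the $\log(2/\epsilon)$ in the numerator) is the signature of a Hoeffding bound: applying Hoeffding's inequality to $\bar X$ makes the event ``$\bar X = 0$ while $p$ is large'' a large-deviation event whose probability is bounded by a term of the form $2e^{-2N(\cdot)^2}$, and $N$ is chosen precisely to drive this below the target confidence. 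The step I expect to be the main obstacle is exactly this calibration: verifying that the stated $N$ makes the chosen tail bound no larger than the intended failure probability, \emph{with the deviation and confidence parameters matched to $\epsilon$ and $\gamma$ in the orientation the statement demands}. Once that inequality is confirmed, taking complements shows that, conditioned on the ``satisfied'' output, the event $p < \epsilon$ holds with probability at least $1-\gamma$, which is the claim.
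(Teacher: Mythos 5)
Your proposal follows essentially the same route as the paper: the ``Counterexample'' half is handled by the identical control-flow observation that the guard forces $w \in L(R) \setminus L(A)$, and the ``Property satisfied'' half is handled by modelling the draws as i.i.d.\ Bernoulli indicators of the bad set $B = L(R)\setminus L(A)$ and invoking Hoeffding's inequality on the empirical frequency. Your setup is in fact somewhat more explicit than the paper's, which merely introduces the hit probability $\hat p$, states that Hoeffding gives $P(\hat p \notin [\tfrac{m}{n}-\epsilon, \tfrac{m}{n}+\epsilon]) < \gamma$ for $n = \log(2/\epsilon)/(2\gamma^2)$ samples, and concludes.

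The one step you leave open --- ``verifying that the stated $N$ makes the chosen tail bound no larger than the intended failure probability, with the deviation and confidence parameters matched in the orientation the statement demands'' --- is a genuine gap in your write-up, but you have put your finger on exactly the point where the paper's own proof is thinnest. The standard two-sided Hoeffding bound $P(|\bar X - p| \ge t) \le 2e^{-2nt^2}$ yields deviation $\epsilon$ with confidence $\gamma$ only when $n \ge \log(2/\gamma)/(2\epsilon^2)$, whereas the algorithm's loop count is $\log(2/\epsilon)/(2\gamma^2)$, i.e., with $\epsilon$ and $\gamma$ in transposed positions; the two coincide only when $\epsilon = \gamma$ (as in the paper's experiments, where both are set to $5\cdot 10^{-4}$). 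The paper asserts the calibrated inequality without carrying out this check. So to complete your argument you would either have to perform that calibration (and discover the transposition), or switch to your simpler one-sided estimate $(1-\epsilon)^N \le e^{-\epsilon N}$, which needs $N \ge \ln(1/\gamma)/\epsilon$ and again does not match the stated loop count in general. One further small slip: your closing sentence phrases the guarantee as a probability conditioned on the ``satisfied'' output; the frequentist statement you correctly set up earlier (if $p \ge \epsilon$ then ``satisfied'' is output with probability at most $\gamma$) is the one actually being proved, and is also the reading intended by the paper.
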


\begin{proof}
If the algorithm terminates with ``Counterexample $w$'', we have $w \in L(\RNN) \setminus L(\A)$. Thus, $w$ is mistakenly classified.
Using the sampling described in Section~\ref{sec:preliminaries}, denote by $\hat p$ the probability to pick $w\in \Sigma^\ast$ such that $w \in L(\RNN) $ and $ w \not\in L(\PropAut)$.
Taking $n = \log(2 / \width) / (2 \confidence^2)$ random samples where $m$ of them are counter examples, by Hoeffding's inequality bound~\cite{Hoff63} we get that $P(\hat p\notin [\frac{m}{n}-\width,\frac{m}{n}+\width])<\confidence.$	
Therefore, if Algorithm~\ref{algo:statistical_model_checking} terminates without finding any counterexamples we get that $\RNN$ is $\epsilon$-approximately correct wrt.\ $\A$ with probability at least $1-\confidence$.
\end{proof}

While the approach works in principle, it has several drawbacks for its
practical application. The size of the test suite
may be quite huge and it may take a while both finding a counterexample or proving
correctness.

Moreover, the correctness result and the algorithm assume that
the words to be tested are chosen according to a random distribution
that somehow also has to take into account the RNN as well as the
property automaton.

It has been reported that the method does not work well in practice
\cite{WeissGY18a} and our experiments support these
findings.

\paragraph{Automaton Abstraction and Model Checking (AAMC).} 
As model checking is mainly working for finite-state systems, a
straightforward idea would be to
\begin{enumerate*}[label={(\alph*)}]
	\item \emph{approximate} the RNN $\RNN$ by a finite-state automaton $\A_\RNN$ such that $L(\RNN) \approx
L(A_\RNN)$ and 
	\item to check whether $L(\A_\RNN) \subseteq L(\PropAut)$ using model checking.
\end{enumerate*}
The algorithmic schema is depicted in Algorithm~\ref{algo:automaton-abstraction-and-model-checking}.

Here, we can instantiate $\textup{Approximation}()$ by
the DFA-extraction algorithms from \cite{WeissGY18a} or \cite{MayrY18}.
In fact, for approximating an RNN by a finite-state system, several approaches
have been studied in the literature, which can be, roughly, divided
into two approaches:
\begin{enumerate*}[label={(\alph*)}]
	\item \emph{abstraction} and 
	\item \emph{automata learning}.
\end{enumerate*}
In the first approach, the state space of the RNN is mapped to
equivalence classes according to certain predicates. The second
approach uses automata-learning techniques such as Angluin's
L$^\ast$. The approach \cite{WeissGY18a} is an intertwined version
combining both ideas.

Therefore, there are different instances of AAMC, varying in the approximation
approach. Note that, for verification as language inclusion, as
considered here, it actually suffices to learn an over-approximation $\A_\RNN$
such that $L(\RNN) \subsetsim L(\A_\RNN)$.

\begin{algorithm}[t]
	\DontPrintSemicolon
	
	\KwIn{An RNN $\RNN$ and a DFA $\PropAut$}
	\BlankLine
	
	$\A_\RNN \gets \text{Approximation}(\RNN)$\label{algo:aamc_aprox}\;
	\uIf{$\exists w \in L(\A_\RNN) \setminus L(\PropAut)$}
	{
        	\Return \colorcounter{``Counterexample $w$''}\;
	}
	\lElse
	{
        	\Return \colorsat{``Property satisfied''}
	}
	
	\caption{Automaton-Abstraction-and-MC}
	\label{algo:automaton-abstraction-and-model-checking}
\end{algorithm}

While the approach seems promising at first hand, its correctness has
two glitches. First, the result ``Property satisfied'' depends on the
quality of the approximation. Second, any returned counterexample $w$
may be \emph{spurious}: $w$ is a counterexample with respect to
$\A_\RNN$ satisfying $\PropAut$ but may not be a counterexample for
$\RNN$ satisfying $\PropAut$.

If $w \in L(\RNN)$, then it is indeed a counterexample, but if not, it
is spurious -- an indication that the approximation needs to be refined. If the
automaton is obtained using abstraction techniques (such as 
predicate abstraction) that guarantee over-approximations,
well-known principles like CEGAR \cite{ClarkeGJLV00}
may be used to refine it. In the automata-learning setting,
$w$ may be used as a counterexample for the
learning algorithm to improve the approximation. 

Repeating the latter idea suggests an interplay between automata learning and
verification -- and this is the idea that we follow in this
paper. However, rather than starting from some approximation
with a certain quality that is later refined according to the RNN
and the property, we perform a direct, \emph{property-directed} approach.


\section{Property-Directed Verification of RNNs}
\label{sec:prop-directed}

We now provide our algorithm for property-directed verification (PDV).
The underlying idea is to replace the EQ in Angluin's $\Lstar$ algorithm with a combination of classical model checking and statistical model checking, which are used as an alternative to EQs.
This approach, which we call \emph{property-directed verification of RNNs}, is outlined as Algorithm~\ref{algo:Lstar} and works as follows.

After initialization of $\Lstar$ and the corresponding data structure,
$\Lstar$ automatically generates and asks MQs to the given RNN $\RNN$
until it comes up with a first hypothesis DFA $\Hyp$ (Line~\ref{algo:Lstar:A}).
In particular, the language $L(\Hyp)$ is consistent with the MQs
asked so far.

At an early stage of the algorithm, $\Hyp$ is generally small.
However, it already shares some characteristics with $\RNN$.
So it is worth checking, using standard automata algorithms,
whether there is no mismatch yet
between $\Hyp$ and $\A$, i.e., whether $L(\Hyp) \subseteq L(\A)$
holds (Line~\ref{algo:Lstar:B}).
Because otherwise (Line~\ref{algo:Lstar:H}), a counterexample word
$w \in L(\Hyp) \setminus L(\A)$ is already a candidate for being
a misclassified input for $\RNN$.
If indeed $w \in L(\RNN)$, $w$ is mistakenly considered positive
by $\RNN$ so that $\RNN$ violates the specification $\A$.
The algorithm then outputs ``Counterexample $w$'' (Line~\ref{algo:Lstar:K}).
If, on the other hand, $\RNN$ happens to agree with $\A$ on a
negative classification of $w$, then there is a mismatch between $\RNN$ and
the hypothesis $\Hyp$ (Line~\ref{algo:Lstar:L}). In that case, $w$ is fed back to
$\Lstar$ to refine $\Hyp$.

Now, let us consider the case that $L(\Hyp) \subseteq L(\A)$
holds (Line~\ref{algo:Lstar:C}). If, in addition, we can establish $L(\RNN) \subseteq L(\Hyp)$,
we conclude that $L(\RNN) \subseteq L(\A)$ and output
``Property satisfied'' (Line~\ref{algo:Lstar:F}). This inclusion test (Line~\ref{algo:Lstar:D}) relies
on statistical model checking using given parameters $\epsilon,\gamma >0$
(cf.\ Algorithm~\ref{algo:statistical_model_checking}).
If the test passes, we have some statistical guarantee of correctness of $\RNN$
(cf. Theorem~\ref{thm:main}). Otherwise, we obtain a word
$w \in L(\RNN) \setminus L(\Hyp)$ witnessing a discrepancy between
$\RNN$ and $\Hyp$ that will be exploited to refine $\Hyp$ (Line~\ref{algo:Lstar:G}).

\begin{algorithm}[t]
	\DontPrintSemicolon
	
	\KwIn{An RNN $\RNN$, a DFA $\PropAut$, and $\varepsilon, \gamma \in (0, 1)$}
	\BlankLine
	
	Initialize $\Lstar$\;
	\BlankLine
	
	\While{true}
	{
	
		$\Hyp \gets \text{ hypothesis provided by } \Lstar$ \label{algo:Lstar:A}\;
		Check $L(\Hyp) \subseteq L(\PropAut)$ \label{algo:Lstar:B}\;
	
		\uIf{$L(\Hyp) \subseteq L(\PropAut)$}
		{
			\label{algo:Lstar:C}
			 
			Check $L(\RNN) \subseteq L(\Hyp)$ using Algorithm~\ref{algo:statistical_model_checking} \label{algo:Lstar:D}\;
			
			\uIf{$L(\RNN) \subseteq L(\Hyp)$}
			{
				\label{algo:Lstar:E}
			
				\Return \colorsat{``Property satisfied''}  \label{algo:Lstar:F}\;
			}
			\lElse
			{
				Feed counterexample back to $\Lstar$  \label{algo:Lstar:G}
			}
			
		}
		\Else
		{
			\label{algo:Lstar:H}

			Let $w \in L(\Hyp) \setminus L(\PropAut)$ \label{algo:Lstar:I}\;
		
			\uIf{$w \in L(\RNN)$}
			{
				\label{algo:Lstar:J}
			
				\Return \colorcounter{``Counterexample $w$''} \label{algo:Lstar:K}\;
			}
			\lElse
			{
				Feed $w$ back to $\Lstar$ \label{algo:Lstar:L}
			}
		}
	}

	\caption{Property-directed verification of RNNs}
	\label{algo:Lstar}
\end{algorithm}

Overall, in the event that the algorithm terminates, we get the following guarantees:

\begin{theorem}\label{thm:main}
Suppose Algorithm~\ref{algo:Lstar} terminates, using SMC for inclusion checking with
parameters $\epsilon$ and $\gamma$.
If it outputs ``Counterexample $w$'', then $w$ is mistakenly classified by $\RNN$ as positive.
If it outputs ``Property satisfied'', then
$\RNN$ is $\epsilon$-approximately correct wrt.\ $\A$
with probability at least $1-\gamma$.
\end{theorem}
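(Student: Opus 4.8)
The plan is to split the argument according to the two possible outputs of Algorithm~\ref{algo:Lstar}, tracing in each case the control flow that produces the output and then invoking the matching guarantee. The ``Counterexample'' case is purely deterministic, whereas the ``Property satisfied'' case rests on the correctness guarantee for the SMC subroutine (Algorithm~\ref{algo:statistical_model_checking}) together with a monotonicity argument that transfers approximate correctness from the hypothesis $\Hyp$ to the specification $\A$.

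For the output ``Counterexample $w$'', I would first observe that it can only be reached through the \textbf{else} branch testing $L(\Hyp) \subseteq L(\A)$: hence $L(\Hyp) \not\subseteq L(\A)$, the word $w$ is chosen so that $w \in L(\Hyp) \setminus L(\A)$, and the inner test $w \in L(\RNN)$ succeeds. Combining $w \notin L(\A)$ with $w \in L(\RNN)$ yields $w \in L(\RNN) \setminus L(\A)$, so $w$ is classified positive by $\RNN$ yet lies outside the specification, i.e.\ it is mistakenly classified. No randomness enters here, so the claim is unconditional.

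For the output ``Property satisfied'', I would note that it is reached only when the exact check establishes $L(\Hyp) \subseteq L(\A)$ and, in the same iteration, the statistical test of Line~\ref{algo:Lstar:D} (an invocation of Algorithm~\ref{algo:statistical_model_checking} on the inclusion $L(\RNN) \subseteq L(\Hyp)$) returns ``Property satisfied''. Applying the SMC correctness guarantee to this invocation gives, with probability at least $1-\gamma$, that $\RNN$ is $\epsilon$-approximately correct wrt.\ $\Hyp$, i.e.\ $\Prob(L(\RNN) \setminus L(\Hyp)) < \epsilon$. The key step is then the transfer to $\A$: since $L(\Hyp) \subseteq L(\A)$ holds exactly, taking complements gives $\Sigma^\ast \setminus L(\A) \subseteq \Sigma^\ast \setminus L(\Hyp)$, whence
\[
L(\RNN) \setminus L(\A) \ \subseteq\ L(\RNN) \setminus L(\Hyp),
\]
and monotonicity of $\Prob$ yields $\Prob(L(\RNN) \setminus L(\A)) \le \Prob(L(\RNN) \setminus L(\Hyp)) < \epsilon$ on the same event of probability at least $1-\gamma$. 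This is exactly $\epsilon$-approximate correctness of $\RNN$ wrt.\ $\A$.

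The step I expect to require the most care is the probabilistic bookkeeping across iterations. Algorithm~\ref{algo:Lstar} may call the SMC subroutine several times before terminating; every call that reports a counterexample returns a genuine $w \in L(\RNN) \setminus L(\Hyp)$ (the deterministic half of the SMC guarantee) and merely refines $\Hyp$, so it contributes no error, and the same holds for the counterexamples produced by the exact check on Line~\ref{algo:Lstar:I}. Only the single final, successful SMC call produces the ``Property satisfied'' verdict, and that is where the confidence $1-\gamma$ originates. The subtlety is that the final hypothesis $\Hyp$ depends on the random outcomes of earlier calls; I would resolve this by observing that the final call draws fresh samples independent of the history, so the false-positive bound $\gamma$ holds conditioned on any fixed final $\Hyp$, and therefore unconditionally by the law of total probability. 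With this conditioning made explicit, the two cases above complete the proof.
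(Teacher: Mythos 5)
Your proposal is correct and follows essentially the same route as the paper's proof: the deterministic control-flow argument for the counterexample case, and for the success case the SMC guarantee on $L(\RNN)\subseteq L(\Hyp)$ combined with the inclusion $L(\Hyp)\subseteq L(\A)$ to get $L(\RNN)\setminus L(\A)\subseteq L(\RNN)\setminus L(\Hyp)$ and monotonicity of $\Prob$. Your closing remarks on the probabilistic bookkeeping across iterations are a careful addition that the paper's proof leaves implicit, but they do not change the argument.
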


\begin{proof}
Suppose the algorithm outputs ``Counterexample $w$'' in Line~\ref{algo:Lstar:K}.
Due to Lines~\ref{algo:Lstar:I} and \ref{algo:Lstar:J}, we have $w \in L(\RNN) \setminus L(\A)$.
Thus, $w$ is a counterexample.

Suppose the algorithm outputs ``Property satisfied'' in Line~\ref{algo:Lstar:F}. 
By Lines~\ref{algo:Lstar:D} and \ref{algo:Lstar:E},
$\RNN$ is $\epsilon$-approximately correct wrt.\ $\Hyp$
with probability at least $1-\gamma$.
That is, $P(L(\RNN) \setminus L(\Hyp)) < \epsilon$ with high probability.
Moreover, by Line~\ref{algo:Lstar:B}, $L(\Hyp) \subseteq L(\A)$.
This implies that $L(\RNN) \setminus L(\A) \subseteq L(\RNN) \setminus L(\Hyp)$
and, therefore, 
$P(L(\RNN) \setminus L(\A)) \le P(L(\RNN) \setminus L(\Hyp))$.
We deduce that
$\RNN$ is $\epsilon$-approximately correct wrt.\ $\A$
with probability at least $1-\gamma$.
\end{proof}

Although we cannot hope that Algorithm~\ref{algo:Lstar} will always terminate, we demonstrate empirically that Algorithm~\ref{algo:Lstar} is an effective way for the verification of RNNs.


\section{Experimental Evaluation}
\label{sec:evaluation}

We now present an experimental evaluational of the three model-checking algorithms SMC, AAMC, and PDV, and provide a comparison of their performance on LSTM networks~\cite{DBLP:journals/neco/HochreiterS97} (a variant of RNNs using LSTM units). 
The algorithms have been implemented\footnote{publicly available at \url{https://github.com/LeaRNNify/Property-directed-verification}} in Python 3.6 using PyTorch 19.09 and Numpy library.
In the implementation, the approximation used for AAMC (Line \ref{algo:aamc_aprox}) is Angluin's PAC version described in \cite{Angluin87} (which was also implemented in~\cite{MayrY18}).
All of the experiments were run on NVIDIA DGX-2 with an Ubuntu OS.

\paragraph{Optimization For Equivalence Queries.} 
In \cite{MayrY18}, the authors implement AAMC but with an optimization that was originally shown in \cite{Angluin87}. This optimization concerns the number of samples required for checking the equivalence between the hypothesis and the taught language. This number depends on $\epsilon, \gamma$ as well as the number for previous equivalence queries $n$ and is calculated by 
$ \frac{1}{\epsilon} \left( \log\frac{1}{\gamma}+\log(2)(n+1) \right)$.
We adopt this optimization in AAMC and PDV as well (Algorithm \ref{algo:automaton-abstraction-and-model-checking} in Line~\ref{algo:aamc_aprox} and Algorithm~\ref{algo:Lstar} in Line~\ref{algo:Lstar:D}).

\paragraph{Synthetic Benchmarks.}
In order to compare the algorithms, we implemented the following procedure which generates a random DFA $\randA$, an RNN $\RNN$ that learned $L(\randA)$, and a finite set of specification DFAs $\specA_1, \specA_2,\ldots$ as follows:
\begin{enumerate}
	\item Choose a random DFA $\randA = (Q,\delta,q_0,F)$, with $|Q| \leq 30$, over an alphabet $\Sigma$ with $|\Sigma| = 5$.
	\item Randomly sample words from $\Sigma^\ast$ as described in Section~\ref{sec:preliminaries} in order to create a training set and a test set.
	\item Train an RNN $\RNN$ with hidden dimension $20|Q|$ and $1 + |Q|/10$ layers. If the accuracy of $\RNN$ on the training set is larger than $95\% $, continue. Otherwise, restart the procedure.
	\item Choose randomly up to five sets $F_i = Q\setminus F$ to define specification DFAs $\specA_i=(Q,\delta,q_0,F\cup F_i)$. 
\end{enumerate}
Using this procedure, we created 30 DFAs\slash RNNs and 138 specifications.

\paragraph{Experimental Results.}
Given an RNN $R$ and a specification DFA $\specA$, we checked whether $R$ satisfies $\specA$ using Algorithms~1--3, i.e., SMC, AAMC, and PDV, with $\epsilon, \gamma= 5\cdot10^{-4}$. 

Table~\ref{tbl:summary_benchmark} summarizes the executions of the three algorithms on our 138 random instances. The columns of the table are as follows:
\begin{enumerate*}[label={(\roman*)}]
	\item the average time was counted in seconds and all the algorithms were timed out after 10 minutes,
	\item \emph{Avg len} is the average length of the found counter examples (if one was found),
	\item \emph{\#~Mistakes} is the number of random instances for which a mistake was found, and
	\item \emph{Avg MQs} is the average number of membership queries asked to the RNN.
\end{enumerate*}

\begin{table}[t]
	\caption{Experimental results\label{tbl:summary_benchmark}} \label{tbl:summary_benchmark}

	\centering

	\begin{tabular}{c*{4}{r}}
		\toprule
		Type & ~~~\emph{Avg time} (s) & ~~~\emph{Avg len} & ~~~\emph{\# Mistakes} & ~~~~~\emph{Avg MQs} \\
		\midrule
		SMC& 92   & 111 & \textbf{122} & 286063 \\
		AAMC& 444 & \textbf{7} & 30 &3701916 \\
		PDV& \textbf{21}  & 11 & 109  &\textbf{28318} \\
		\bottomrule
	\end{tabular}

\end{table}

Note that not only is PDV faster and finds more errors than AAMC, the average number of states of the final DFA is also much smaller: {\bf26} states with PDV and {\bf319} with AAMC.

Comparing PDV to SMC, it is 4.5 times faster and it asked 10 times less MQs from the RNN, even though it found a little less mistakes. The time difference can become even more apparent if PDV was implemented in something other than Python. Another observation is that the lengths of the counterexamples are much smaller in PDV.

\paragraph{Faulty Flows.}
One of the advantages of extracting DFAs in order to detect mistakes in a given RNN is the possibility to find not only one mistake but a ``faulty flow''. 
For example, Figure~\ref{fig:extracted_faulty_flow} shows one DFA extracted with PDV, based on which we found a mistake in the corresponding RNN. The counter example we found was $ (a,b,c,e,e) $. One can see that the word $ (a,b,c,e) $ is a loop in the DFA. Hence we can suspect that this could be a ``faulty flow''. Checking the words $w_n = (a,b,c,e)^n(e) $ for $n\in[1..100]$, we observed that for any $n\in[1..100] $ the word $w_n$ was in the RNN language but not in the specification.

\begin{figure}
	\centering

	\begin{tikzpicture}
		\node[state] (0) at (0, 0) {$0$};
		\node[state] (1) at (2.25, 0) {$1$};
		\node[state] (2) at (-2.25, -2.25) {$2$};
		\node[state] (3) at (0, -2.25) {$3$};
		\node[state] (4) at (2.25, -2.25) {$4$};
		
		\draw[<-, shorten <=1pt] (0.west) -- +(-.3, 0);
		
		\begin{scope}[every node/.append style={font=\small}]
			\path[->] (0) edge[bend left=20,red,thick] node {$e$} (1) edge[thick,blue] node {$a, c$} (3) edge node[sloped] {$b, d$} (4);
			\path[->] (1) edge node {$a$} (0) edge node {$b, c, d, e$} (4);
			\path[->] (2) edge[blue,thick] node {$e$} (0) edge[blue,loop above,thick] node {$b, c, d$} () edge node {$a$} (3);
			\path[->] (3) edge[blue,thick,bend left=20] node {$b$} (2) edge node {$a, c, d, e$} (4);
			\path[->] (4) edge[bend left=20] node {$a, c$} (3) edge[loop right] node {$b, d, e$} ();
		\end{scope}
	\end{tikzpicture}
	\caption{Faulty Flow in DFA extracted through PDV} \label{fig:extracted_faulty_flow}
\end{figure}
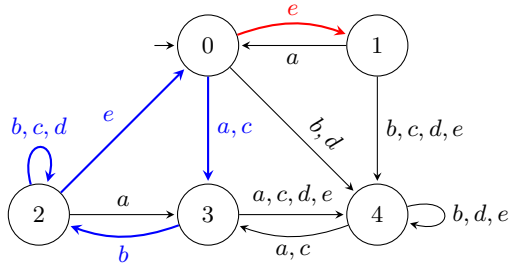

To automate the reasoning above, we did the following:
Given an RNN $\RNN$, a specification $\specA$, the extracted DFA $\Hyp$, and the counter example $w$:
\begin{enumerate}
	\item Build the cross product DFA: $A_\times = \specA \times \Hyp$.\\
	\item For every prefix $w_1$ of the  counter example $w = w_1w_2$, denote by $s_{w_1}$ the state to which the prefix $w_1$ leads in $ A_\times $. For any loop $ \ell$ starting from $s_{w_1}$, check if $w^f_n = w_1\ell^n w_2 $ is a counter example for $n\in[1...100]$. \\
	\item If $w^f_n$ is a counter example for more than 20 times, declare ``found a faulty flow''.
\end{enumerate}
Using this procedure, we managed to find faulty flows in 81\slash 109 of the counterexamples that were found by PDV.

\paragraph{RNNs Identifying Contact Sequences.} 
Contact tracing~\cite{2002principles} has proven to be increasingly effective in curbing the spread of infectious diseases.
In particular, analyzing contact sequences---sequences of individuals who have been in close contact in a certain order---can be crucial in identifying individuals who might be at risk during an epidemic.    
We, thus, look at RNNs which can potentially aid contact tracing by identifying possible contact sequences.   
However, in order to deploy such RNNs in practice, one would require them to be verified adequately. One does not want to alert individuals unnecessarily even if they are safe or overlook individuals who could be at risk.

In a real world setting, one would obtain contact sequences from contact-tracing information available from, for instance, contact-tracing apps. 
However, such data is often difficult to procure due to privacy issues. 
Thus, in order to mimic a real life scenario, we use data available from \url{www.sociopatterns.org} which contains information about interaction of humans in public places (hospitals, schools, etc.) presented as temporal networks.

Formally, a \emph{temporal network} $\temporalnet=(\vertices,\edges)$~\cite{DBLP:reference/snam/Holme14} is a graph structure consisting of a set of vertices $\vertices$ and a set of labeled edges $\edges$, where the labels represent the timestamp during which the edge was active.
Figure~\ref{fig:temporal-network} is a simple temporal network, which can perceived as contact graph of 4 workers in an office where edge labels represent the time of meeting between them. 
A \emph{time-respecting path} $\pi\in \vertices^\ast$---a sequence of vertices such that there exists a sequence of edges with increasing time labels---depicts a contact sequence in such a network. In the above example, $C$, $D$, $A$, $B$ is a time-respecting path while $A$, $B$, $C$, $D$ is not.

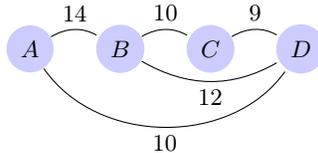
\begin{figure}
	\centering
	\begin{tikzpicture}[scale=.8,auto=center]
	\node [circle, fill = blue!20] (A) at (0,0)   {$A$};
	\node [circle, fill = blue!20] (B) at (1.5,0)   {$B$};
	\node [circle, fill = blue!20] (C) at (3,0)   {$C$};
	\node [circle, fill = blue!20] (D) at (4.5,0)   {$D$}; 
	\path [bend left] (A) edge node {\small{14}} (B);
	\path [bend left] (B) edge node {\small{10}} (C);
	\path [bend left] (C) edge node {\small{9}} (D);
	\path [bend right=55] (A) edge node[below] {\small{10}} (D);
	\path [bend right] (B) edge node[below] {\small{12}} (D);
	\end{tikzpicture}
	\caption{Temporal network representing depicting contact between 4 individuals}\label{fig:temporal-network}
\end{figure}

For our experiment, given a temporal network $\temporalnet$, we generate an RNN $R$ recognizing contact sequences as follows:  

\begin{enumerate}
	\item We create training and test data for the RNN by generating: valid time-respecting paths (of lengths between 5 and 15) using labeled edges from $\temporalnet$; and invalid time-respecting paths, by considering a valid path and randomly introducing breaks in the path. Number of time-respecting paths in the training set is twice the size of the number of labeled edges in $\temporalnet$, while the test set is one-fifth the size of the training set.
	\item We train RNN $R$ with hidden dimension $|V|$ (minimum 100) and $\floor{{2+{|V|}/100}}$ layers on the generated training data. We consider only those RNNs that could be trained within 5 hours with high accuracy (average 99\%) on the test data.
	\item As for the specification, we use a DFA which accepts all possible paths (disregarding the time labels) in the network. Using such a specification would allow us to check whether the RNN learned unwanted edges between vertices. 
\end{enumerate}
Using this process, from the seven temporal networks, we generate seven RNNs and seven specification DFAs.   
We ran the three algorithms, i.e., SMC, PDV, and AAMC, on the generated RNNs, using the same parameters as used for the random instances. We note the length of counterexample, the extracted DFA size (only for PDV and AAMC) and the running time of the algorithms (see Table~\ref{tbl:practical-example}).

\begin{table}
	\centering
	\begin{tabular}{clllr}
		\toprule
		& & \emph{Len. of}  & \emph{Extracted} &  \\
		\emph{Case} & \emph{Alg.} & \emph{counterexample}  & \emph{DFA size} & \emph{Time} (s) \\
		\midrule
		Across&SMC& 3 &  & 0.3 \\
		Kenyan&AAMC& 2 & 328 & 624.76  \\
		Household&PDV& 2 & 2 & 0.22 \\
		\midrule
		&SMC& 2 &  & 0.23 \\
		Workplace&AAMC& 2 & 111 & 604.99  \\
		&PDV& 2 & 2 & 0.77 \\
		\midrule
		&SMC& 5 & & 0.33 \\
		Highschool&AAMC& 2 & 91 & 627.30 \\
		2011&PDV& 2 & 2 & 0.19 \\
		\midrule
		&SMC& 7 & & 0.24 \\
		Hospital&AAMC& 2 & 36 & 614.76 \\
		&PDV& 2 & 2 & 0.006 \\
		\midrule
		Within&SMC& 2 & & 0.28 \\
		Kenyan&AAMC& 2 & 178 & 620.30 \\
		Household&PDV& 2 & 2 & 0.27 \\
		\midrule
		&SMC& 71 & & 1.51 \\
		Conference&AAMC& 2 & 38 & 876.19 \\
		&PDV& 2 & 2 & 0.33 \\
		\midrule
		&SMC& 3 & & 0.48 \\
		Workplace&AAMC& 2 & 87 & 621.44 \\
		2015&PDV& 2 & 2 & 1.11 \\
		\bottomrule\\
	\end{tabular}	
	\caption{Results of model-checking algorithm on RNN identifying contact sequences\label{tbl:practical-example}}
\end{table}

We make three main observations from Table~\ref{tbl:practical-example}. First, the counterexamples obtained by PDV and AAMC (avg. length 2), are much more succinct than those by SMC (avg. length 13.1). Small counterexamples help in identifying the underlying error in the RNN, while, long and random counterexamples provide much less insight. For example, from the counterexamples obtained from PDV and AAMC, we learned that the RNN overlooked certain edges or identified wrong edges. This result highlights the demerit of SMC, which has also been observed by~\cite{WeissGY18a}.    
Second, the running time of SMC and PDV (avg. 0.48 secs and 0.41 secs) is comparable while that of AAMC is prohibitively large (avg. 655.68 secs), indicating that model checking on small and rough abstractions of the RNN produces superior results. 
Third, the extracted DFA size, in case of AAMC (avg. size 124.14), is always larger compared to PDV (avg. size 2), indicating that RNNs are quite difficult to be approximated by small DFAs and this delays the model-checking process as well.  

Overall, our experiments confirm that PDV produces succinct counterexamples reasonably fast.


\section{Conclusion}

We proposed property-directed verification (PDV) as a new verification method for formally verifying RNNs with respect to regular specifications. To this end, we combined classical model checking with Angluin's L* algorithm, using statistical model checking for answering equivalence queries. PDV is (fully) correct when finding counterexamples (to the specification) and correct up to a given error probability when confirming correctness. While plain statistical model checking (SMC) and abstractions with model checking (AAMC) have similar guarantees, our evaluation suggests that PDV finds shorter counterexamples faster, which can even be generalized to identify faulty flows.

It is straightforward to extend our ideas to the setting of Moore/Mealy machines supporting the setting of richer classes of RNN classifiers, but this is left as part of future work.
Another future work is to investigate the applicability of our approach for RNNs representing more expressive languages, such as context-free ones. 
Note that the current approach is noise-free. It is worth exploring a noisy setting by studying 
how to minimize the noise introduced during RNN training. 
Finally, we plan to extend the PDV algorithm for the formal verification of RNN-based agent environment systems, and to compare it with the existing results for such closed loop systems controlled by RNNs.

\section*{Acknowledgments}
This work was partly supported by the PHC PROCOPE 2020 project \emph{LeaRNNify} (number 44707TK), funded by the German Academic Exchange Service (Deutscher Akademischer Austauschdienst) and Campus France.

\bibliographystyle{splncs04}
\bibliography{bib}

\begin{thebibliography}{10}
\providecommand{\url}[1]{\texttt{#1}}
\providecommand{\urlprefix}{URL }
\providecommand{\doi}[1]{https://doi.org/#1}

\bibitem{AkintundeKLP19}
Akintunde, M.E., Kevorchian, A., Lomuscio, A., Pirovano, E.: Verification of
  rnn-based neural agent-environment systems. In: The Thirty-Third {AAAI}
  Conference on Artificial Intelligence, {AAAI} 2019. pp. 6006--6013. {AAAI}
  Press (2019). \doi{10.1609/aaai.v33i01.33016006}

\bibitem{Angluin87}
Angluin, D.: Learning regular sets from queries and counterexamples. Inf.
  Comput.  \textbf{75}(2),  87--106 (1987)

\bibitem{AyacheEG18}
Ayache, S., Eyraud, R., Goudian, N.: Explaining black boxes on sequential data
  using weighted automata. In: Unold, O., Dyrka, W., Wieczorek, W. (eds.)
  Proceedings of the 14th International Conference on Grammatical Inference,
  {ICGI} 2018, Wroc{\l}aw, Poland, September 5-7, 2018. Proceedings of Machine
  Learning Research, vol.~93, pp. 81--103. {PMLR} (2018)

\bibitem{BK2008}
Baier, C., Katoen, J.: Principles of model checking. {MIT} Press (2008)

\bibitem{ChoMGBBSB14}
Cho, K., van Merrienboer, B., G{\"{u}}l{\c{c}}ehre, {\c{C}}., Bahdanau, D.,
  Bougares, F., Schwenk, H., Bengio, Y.: Learning phrase representations using
  {RNN} encoder-decoder for statistical machine translation. In: Proc.\ EMNLP.
  pp. 1724--1734. {ACL} (2014)

\bibitem{ClarkeGJLV00}
Clarke, E.M., Grumberg, O., Jha, S., Lu, Y., Veith, H.: Counterexample-guided
  abstraction refinement. In: Emerson, E.A., Sistla, A.P. (eds.) Computer Aided
  Verification, 12th International Conference, {CAV} 2000, Chicago, IL, USA,
  July 15-19, 2000, Proceedings. Lecture Notes in Computer Science, vol.~1855,
  pp. 154--169. Springer (2000)

\bibitem{ElboherGK20}
Elboher, Y.Y., Gottschlich, J., Katz, G.: An abstraction-based framework for
  neural network verification. In: Lahiri, S.K., Wang, C. (eds.) Computer Aided
  Verification - 32nd International Conference, {CAV} 2020, Los Angeles, CA,
  USA, July 21-24, 2020, Proceedings, Part {I}. Lecture Notes in Computer
  Science, vol. 12224, pp. 43--65. Springer (2020).
  \doi{10.1007/978-3-030-53288-8\_3}

\bibitem{GiacomoV15}
Giacomo, G.D., Vardi, M.Y.: Synthesis for {LTL} and {LDL} on finite traces. In:
  Yang, Q., Wooldridge, M.J. (eds.) Proceedings of the Twenty-Fourth
  International Joint Conference on Artificial Intelligence, {IJCAI} 2015,
  Buenos Aires, Argentina, July 25-31, 2015. pp. 1558--1564. {AAAI} Press
  (2015)

\bibitem{DBLP:journals/neco/HochreiterS97}
Hochreiter, S., Schmidhuber, J.: Long short-term memory. Neural Comput.
  \textbf{9}(8),  1735--1780 (1997)

\bibitem{Hoff63}
Hoeffding, W.: Probability inequalities for sums of bounded random variables.
  Journal of the American Statistical Association  \textbf{58}(301),  13--30
  (1963)

\bibitem{DBLP:reference/snam/Holme14}
Holme, P.: Temporal networks. In: Encyclopedia of Social Network Analysis and
  Mining, pp. 2119--2129. Springer (2014)

\bibitem{JBK-arxiv2020}
Jacoby, Y., Barrett, C.W., Katz, G.: Verifying recurrent neural networks using
  invariant inference. CoRR  \textbf{abs/2004.02462} (2020),
  \url{https://arxiv.org/abs/2004.02462}

\bibitem{2002principles}
Keck, C.: Principles of Public Health Practice. Cengage Learning (2002),
  \url{https://books.google.co.in/books?id=Fql1RAAACAAJ}

\bibitem{Kwiatkowska:2019}
Kwiatkowska, M.Z.: {Safety Verification for Deep Neural Networks with Provable
  Guarantees (Invited Paper)}. In: Fokkink, W., van Glabbeek, R. (eds.) 30th
  International Conference on Concurrency Theory (CONCUR 2019). Leibniz
  International Proceedings in Informatics (LIPIcs), vol.~140, pp. 1:1--1:5.
  Schloss Dagstuhl--Leibniz-Zentrum fuer Informatik (2019).
  \doi{10.4230/LIPIcs.CONCUR.2019.1}

\bibitem{MayrY18}
Mayr, F., Yovine, S.: Regular inference on artificial neural networks. In:
  Holzinger, A., Kieseberg, P., Tjoa, A.M., Weippl, E.R. (eds.) Machine
  Learning and Knowledge Extraction - Second {IFIP} {TC} 5, {TC} 8/WG 8.4, 8.9,
  {TC} 12/WG 12.9 International Cross-Domain Conference, {CD-MAKE} 2018,
  Hamburg, Germany, August 27-30, 2018, Proceedings. Lecture Notes in Computer
  Science, vol. 11015, pp. 350--369. Springer (2018)

\bibitem{OkudonoWSH20}
Okudono, T., Waga, M., Sekiyama, T., Hasuo, I.: Weighted automata extraction
  from recurrent neural networks via regression on state spaces. In: The
  Thirty-Fourth {AAAI} Conference on Artificial Intelligence, {AAAI} 2020, The
  Thirty-Second Innovative Applications of Artificial Intelligence Conference,
  {IAAI} 2020, The Tenth {AAAI} Symposium on Educational Advances in Artificial
  Intelligence, {EAAI} 2020, New York, NY, USA, February 7-12, 2020. pp.
  5306--5314. {AAAI} Press (2020)

\bibitem{OmlinG96}
Omlin, C.W., Giles, C.L.: Extraction of rules from discrete-time recurrent
  neural networks. Neural Networks  \textbf{9}(1),  41--52 (1996).
  \doi{10.1016/0893-6080(95)00086-0}

\bibitem{PeledVY02}
Peled, D.A., Vardi, M.Y., Yannakakis, M.: Black box checking. Journal of
  Automata, Languages and Combinatorics  \textbf{7}(2),  225--246 (2002).
  \doi{10.25596/jalc-2002-225}

\bibitem{WeissGY18a}
Weiss, G., Goldberg, Y., Yahav, E.: Extracting automata from recurrent neural
  networks using queries and counterexamples. In: Dy, J.G., Krause, A. (eds.)
  Proceedings of the 35th International Conference on Machine Learning, {ICML}
  2018, Stockholmsm{\"{a}}ssan, Stockholm, Sweden, July 10-15, 2018.
  Proceedings of Machine Learning Research, vol.~80, pp. 5244--5253. {PMLR}
  (2018)

\end{thebibliography}

\end{document}